\pdfoutput=1
\documentclass[10pt,a4paper]{article}
\usepackage[T1]{fontenc}
\usepackage[utf8]{inputenc}
\usepackage{lmodern,microtype}
\usepackage[margin=1in,includeheadfoot]{geometry}
\usepackage{amsmath,amssymb,amsthm}
\usepackage{graphicx,booktabs,tabularx,array}
\usepackage[font=normalsize,labelfont=bf]{caption}
\usepackage{xcolor,fancyhdr,titlesec,placeins}
\usepackage[numbers,sort&compress]{natbib}
\usepackage[hidelinks]{hyperref}
\usepackage{url}
\hypersetup{pdftitle={Approaching Backpropagation Accuracy with Weight Perturbation on MNIST},pdfauthor={Patrick Inoue},pdfsubject={Weight perturbation learning, training budgets, and finite-horizon gradient approximation},pdfkeywords={weight perturbation, node perturbation, backpropagation, MNIST, zeroth-order optimization}}
\titleformat{\section}{\large\bfseries}{\thesection}{0.7em}{}
\titleformat{\subsection}{\normalsize\bfseries}{\thesubsection}{0.7em}{}
\titlespacing*{\section}{0pt}{1.9ex plus .4ex minus .2ex}{.7ex plus .2ex}
\titlespacing*{\subsection}{0pt}{1.4ex plus .3ex minus .2ex}{.5ex plus .2ex}
\newcommand{\E}{\mathbb{E}}
\newcommand{\R}{\mathbb{R}}
\newcommand{\Cov}{\operatorname{Cov}}
\newcommand{\tr}{\operatorname{tr}}
\newtheorem{proposition}{Proposition}
\newcolumntype{Y}{>{\raggedright\arraybackslash}X}
\definecolor{curveblue}{HTML}{1F77B4}
\definecolor{curveorange}{HTML}{FF7F0E}
\definecolor{curvegreen}{HTML}{2CA02C}
\definecolor{curvered}{HTML}{D62728}
\definecolor{curvepurple}{HTML}{9467BD}
\newcommand{\key}[2]{\mbox{\textcolor{#1}{\rule[.55ex]{15pt}{1.05pt}}\enspace #2}}
\date{}
\title{\vspace{-1.4em}\LARGE\bfseries Approaching Backpropagation Accuracy\\with Weight Perturbation on MNIST}
\author{Patrick Inoue\thanks{The MNIST experiments originate from the author's master's thesis, submitted under the name Patrick Stricker at Albstadt-Sigmaringen University in March 2023~\cite{stricker2023}.}}

\begin{document}
\maketitle
\thispagestyle{plain}
\vspace{-2.0em}
\begin{abstract}\normalsize
\noindent Weight perturbation (WP) estimates learning directions from random parameter perturbations and a scalar change in loss, without backpropagating derivatives. We examine how batch size, training duration, and estimator stabilization affect its attainable accuracy in fully connected MNIST networks. In the shallow configuration, large-batch training reaches approximately 97\% test accuracy, compared with an early backpropagation (BP) reference near 97.5--98\%. The same comparison exposes a substantial efficiency gap: under a ten-epoch budget, shallow WP reaches 90.2\%, whereas the reported BP reference reaches 97.4\%. Epoch-indexed curves show diminishing returns from increasing batch size beyond 4,000. Learning-rate reduction and averaging independent perturbation-based updates also stabilize deeper networks, although their recorded results do not establish BP-level accuracy. We distinguish these empirical findings from a finite-horizon approximation argument: under stated regularity conditions, population WP can approximate a fixed gradient-descent trajectory arbitrarily closely at any fixed finite depth. Positive classification margins then suffice to preserve the reference predictions on a finite test set with arbitrarily high probability. Together, the results identify near-BP accuracy as a configuration- and budget-dependent possibility, rather than evidence of equal efficiency or an unconditional accuracy guarantee across depth.
\end{abstract}
\noindent\textbf{Keywords:} weight perturbation; node perturbation; backpropagation; MNIST; gradient estimation.

\section{Introduction}

\noindent Learning from global performance feedback without transmitting error derivatives through neural circuitry is a central challenge in biologically plausible credit assignment. Backpropagation (BP) solves this problem computationally, but its conventional implementation relies on coordinated forward and backward computations, precise layer-specific error signals, and corresponding feedback pathways whose biological realization remains debated~\cite{rumelhart1986,lillicrap2020}. Biological synapses, in contrast, exhibit spontaneous fluctuations and remodeling, while synaptic plasticity can be regulated by neuromodulatory signals associated with reward and behavioral outcomes~\cite{seung2003,ziv2018,reynolds2002,kusmierz2017}. These properties motivate learning mechanisms that exploit local stochastic exploration and globally available evaluative signals to guide synaptic adaptation.

Weight perturbation (WP) offers a particularly direct approach to this problem. Synaptic weights are transiently perturbed, and the resulting change in task performance determines the direction of subsequent plasticity~\cite{dembo1990,cauwenberghs1993}. Each synapse requires only information about its own perturbation and a shared scalar feedback signal. The learning rule operates without an explicit differentiable model of the network, backward error propagation, or symmetric feedback weights. WP therefore provides a model-free candidate for biologically motivated learning, connecting stochastic synaptic plasticity with gradient-based optimization~\cite{seung2003,fiete2006,zuge2023}. Perturbation-based learning has also been combined with competitive Hebbian plasticity in sparse excitatory neural networks~\cite{stricker2024}, illustrating its compatibility with additional biologically motivated learning mechanisms.

The principal obstacle is the statistical efficiency of high-dimensional credit assignment. WP infers a parameter update from scalar performance differences, introducing substantial variance into the estimated learning direction. Consequently, convergence can be considerably slower than gradient-based training~\cite{werfel2005,zuge2023}. On MNIST, Z\"uge et al.\ reported a test accuracy of $92.3\pm0.2\%$ for WP, compared with $97.7\pm0.2\%$ for stochastic gradient descent, in a network with one hidden layer after 50,000 weight updates~\cite{zuge2023}. This gap raises a fundamental question: does stochastic synaptic credit assignment impose a limit on attainable classification accuracy, or can improved estimator stability and extended training overcome it?

Here, we show that extended large-batch training brings WP to approximately $97\%$ test accuracy on MNIST, approaching the early BP reference of $97.5$--$98\%$ in a fully connected network with one hidden layer. The remaining accuracy gap is approximately $0.5$--$1$ percentage point. Training efficiency remains substantially different: under a ten-epoch budget, WP achieves $90.2\%$, compared with $97.4\%$ for the reported BP reference. We further show that learning-rate reduction and averaging independent perturbation estimates can stabilize WP training in networks with five hidden layers. Finally, a finite-horizon analysis establishes conditions under which population WP can approximate a gradient-descent trajectory arbitrarily closely. These findings demonstrate that high classification accuracy is attainable through model-free, perturbation-based synaptic learning, while identifying the training conditions and computational costs that govern its practical performance~\cite{stricker2023}.

\subsection{Perturbation learning: foundations and recent developments}
\noindent Scalar-feedback learning has a substantial history in model-free learning, stochastic error descent, and statistical gradient-following algorithms~\cite{dembo1990,cauwenberghs1993,williams1992}. Simultaneous perturbation stochastic approximation provides a related finite-difference route to optimization~\cite{spall1992}, while early neural-network work connected weight and node perturbations and their computational costs~\cite{flower1992}. Conductance-perturbation models extended these ideas to spiking networks and birdsong learning~\cite{fiete2006,fiete2007}. Population averaging was subsequently used to reduce variability in perturbation-based learning~\cite{suszynski2013}.

The relative efficiency of WP and node perturbation (NP) is not determined by parameter count alone. Analyses in linear networks identify dimension-dependent learning costs~\cite{werfel2005}, whereas Z\"uge et al.\ show that trial duration and effective task dimensionality can make WP comparable or superior to NP~\cite{zuge2023}. The 2023 journal article succeeds the 2021 preprint used in the thesis and is reproduced with its supplement in Z\"uge's 2025 dissertation~\cite{zuge2025}. For NP, Hiratani et al.\ identify diffusion-driven instability under model mismatch and a stabilizing role for normalization~\cite{hiratani2022}. Dalm et al.\ relate NP to directional derivatives and combine this perspective with input decorrelation, reporting substantially improved multilayer learning in their revised 2026 preprint~\cite{dalm2026}.

For MNIST, Z\"uge et al.\ report that their fully trained $784\!\rightarrow\!100\!\rightarrow\!10$ network reaches $92.3\pm0.2\%$ with WP versus $97.7\pm0.2\%$ with SGD at $B=1{,}000$ after 50,000 updates (five-run means and standard deviations; supplementary Table~2)~\cite{zuge2023}. The no-hidden-layer control has a smaller gap at a lower accuracy (Fig.~S11); its teacher-target variants are different tasks. The dissertation retains these experiments, without adding a long-budget MNIST result that closes the hidden-layer gap~\cite{zuge2025}. Our study extends the batch-size and training-budget range; it does not establish superiority under a matched cross-study protocol.

Recent work includes the Dopamine optimizer for XOR and time-series tasks~\cite{nambusubramaniyan2025}. Gaussian zeroth-order optimization~\cite{nesterov2017}, population-based evolution strategies~\cite{salimans2017}, and forward-evaluation language-model fine-tuning~\cite{malladi2023} provide broader context, but not matched from-scratch MNIST benchmarks. Forward gradients use automatic differentiation~\cite{baydin2022} and are distinct from derivative-free loss-difference estimators.

\section{Materials and Methods}
\subsection{Task and architectures}

\noindent MNIST contains 60,000 training and 10,000 test images with ten digit classes~\cite{lecun1998}. Following the thesis~\cite{stricker2023}, images are flattened to 784 components, scaled to $[0.01,0.99]$, and processed by fully connected hidden layers with hyperbolic-tangent activation and biases, followed by a ten-unit softmax output. The shallow architecture is $784\!\rightarrow\!100\!\rightarrow\!10$. The deeper configuration, called Scen1 in the thesis, has five hidden layers of 100 units each. A wider five-hidden-layer configuration uses widths $2500,2000,1500,1000,500$, following the architecture family of Cire\c{s}an et al.~\cite{ciresan2010}, but without their image-deformation protocol.

Weights use the stated Xavier-normal initialization with variance $2/(m_{\mathrm{in}}+m_{\mathrm{out}})$~\cite{glorot2010,stricker2023}. The loss on a batch $\mathcal B$ of size $B$ is
\begin{equation}
 E_{\mathcal B}(\theta)=-\frac1B\sum_{n\in\mathcal B}\sum_{k=1}^{10}
 \widetilde y_{nk}\log p_{\theta,k}(x_n),
 \label{eq:loss}
\end{equation}
where $\widetilde y$ denotes the label-smoothed target distribution. The batch-mean convention follows the thesis's methods prose and appendix; its displayed loss equation omits $1/B$. Reported learning rates are therefore retained without rescaling.

\subsection{Weight perturbation and population averaging}

\noindent Let $\theta\in\R^d$ collect the perturbed parameters. At each update, draw $\xi\sim\mathcal N(0,\sigma^2I_d)$ and retain this perturbation for every example in the batch. Both loss evaluations use the same inputs and targets. The baseline-subtracted estimator and update are
\begin{equation}
 \widehat g_{\sigma,1}(\theta;\mathcal B)
 =\frac{E_{\mathcal B}(\theta+\xi)-E_{\mathcal B}(\theta)}{\sigma^2}\,\xi,
 \qquad \theta_{t+1}=\theta_t-\eta_t\widehat g_{\sigma,1}.
 \label{eq:wp}
\end{equation}
\noindent Perturbed parameters are temporary; the update is applied to the unperturbed vector. A lower perturbed loss reinforces the perturbation direction. This is the simultaneous Gaussian WP rule used in the thesis and its reference study~\cite{stricker2023,zuge2023}.

Population WP averages complete loss--perturbation products from $P$ independent perturbations at the same parameter vector~\cite{suszynski2013,stricker2023}:
\begin{equation}
 \widehat g_{\sigma,P}(\theta;\mathcal B)=\frac1P\sum_{p=1}^P
 \frac{E_{\mathcal B}(\theta+\xi^{(p)})-E_{\mathcal B}(\theta)}{\sigma^2}\,\xi^{(p)}.
 \label{eq:population}
\end{equation}
\noindent The baseline is shared, and population members are not retained as an inference ensemble. This is an average of products, not a product of separately averaged losses and perturbations. Independence gives the exact conditional identity
\begin{equation}
 \Cov_{\xi}(\widehat g_{\sigma,P}\mid\theta,\mathcal B)
 =\frac1P\Cov_{\xi}(\widehat g_{\sigma,1}\mid\theta,\mathcal B).
 \label{eq:variance}
\end{equation}
\noindent Increasing $B$ and increasing $P$ therefore act on different sources of variability: batch composition and perturbation realization. Population averaging reduces estimator variance; it does not remove disturbances before their propagation through the network. 

\subsection{Node perturbation and comparison protocol}
\noindent NP perturbs neuronal inputs or activities rather than every parameter. In the temporally extended formulation motivating the experiments, node perturbations vary across examples, and their correlations with presynaptic activity are accumulated before multiplication by a scalar loss difference~\cite{zuge2023}. WP instead holds one parameter perturbation fixed over the batch. NP supplies a reference mechanism for understanding this distinction; no additional NP experiments are introduced here.

The shallow reference uses up to approximately 50,000 updates. The update-indexed extended comparisons display up to approximately 450,000 updates. The large-batch WP--BP comparison uses batch sizes from 2,000 to 10,000. The separate budget comparison also includes $B=1{,}000$ and displays up to 22,500 epochs in its epoch-indexed panel. A separate ten-epoch comparison uses locally selected settings from Random Grid Path Search~\cite{knoblauch2022,stricker2023}; these settings are not imputed to the long runs.

For $U$ updates with a fixed batch size, nominal data exposure is
\begin{equation}
 N_{\mathrm{epochs}}=\frac{UB}{60{,}000}.
 \label{eq:epochs}
\end{equation}
\noindent This counts batch presentations, not all forward evaluations. Population WP requires $P+1$ forward evaluations per update, versus two for $P=1$. BP additionally computes derivatives. Thus equal update counts, equal epochs, and equal runtime are distinct comparison criteria.

\section{Results}
\subsection{From the small-batch gap to near-BP accuracy}
\noindent The starting gap depends strongly on configuration (Fig.~\ref{fig:reference}). At approximately 50,000 updates, shallow WP reaches about 63\% accuracy with $B=1$, compared with about 95\% for BP. With $B=1{,}000$, WP reaches approximately 92--93\%, against BP near 98\%. Equal updates do not equalize data exposure: these settings correspond to roughly 0.83 and 833 nominal epochs, respectively.

\begin{figure}[tbp]
\centering
{\normalsize\key{curveblue}{WP}\hspace{2em}\key{curveorange}{BP}\par}\vspace{4pt}
\includegraphics[width=.488\linewidth]{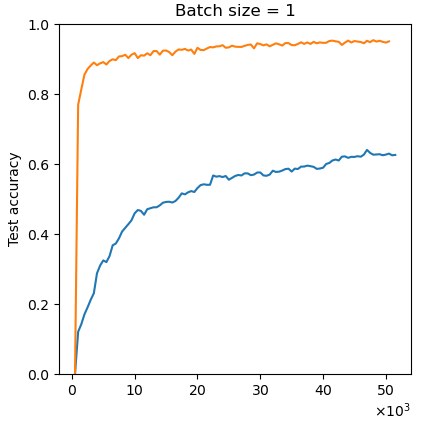}\hfill
\includegraphics[width=.488\linewidth]{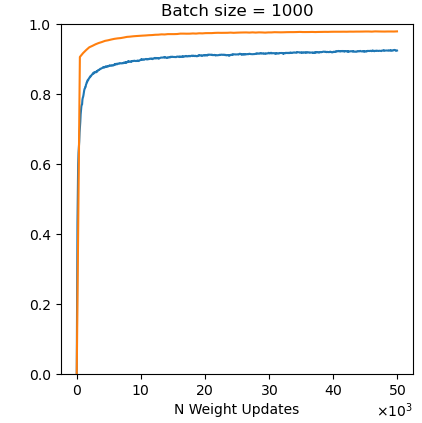}
\caption{Shallow reference from the author's thesis~\cite{stricker2023}, reproducing the setting of Z\"uge et al.~\cite{zuge2023}. Left: batch size 1; right: 1,000. Blue denotes WP and orange BP. Both horizontal axes count weight updates in thousands. The original curves and axes are unchanged.}
\label{fig:reference}
\end{figure}

The extended comparison is the principal accuracy result (Fig.~\ref{fig:long}). For batches of 4,000 and 10,000, shallow WP rises to approximately 96.8\% and 97.0\%, respectively, near the end of the displayed 450,000-update budget. The early BP reference is approximately 97.5--98\%, leaving a visually estimated gap of about 0.5--1 percentage point. These curves directly support near-BP performance for this shallow configuration.

\begin{figure}[tbp]
\centering
{\normalsize\key{curveblue}{WP, shallow}\hspace{1.4em}\key{curveorange}{BP}\hspace{1.4em}\key{curvegreen}{WP, five hidden layers}\par}\vspace{4pt}
\includegraphics[width=.487\linewidth]{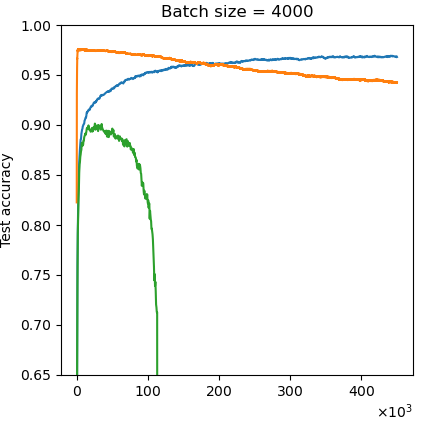}\hfill
\includegraphics[width=.487\linewidth]{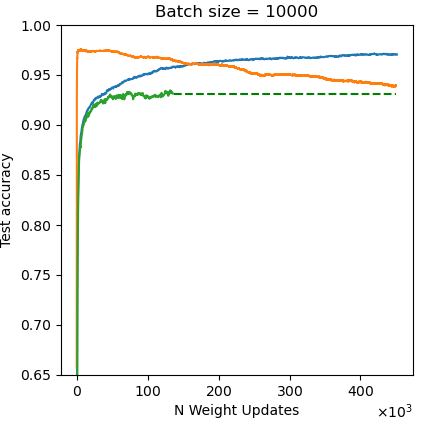}
\caption{Extended training at batch sizes 4,000 (left) and 10,000 (right), selected from the original four-panel thesis figure~\cite{stricker2023}. Blue is the shallow WP model (original label ``Baseline''), orange the BP reference, and green five-hidden-layer WP (``Scen1''). The green dashed segment at $B=10{,}000$ extends the last shown level; it is not a measured continuation. Axes and solid curves are preserved. The truncated vertical range starts at 0.65. The relevant accuracy comparison is with BP's early peak, not merely its deteriorated final checkpoint.}
\label{fig:long}
\end{figure}

BP reaches its high accuracy early and subsequently declines towards roughly 94\% during prolonged training. WP eventually exceeds these late BP checkpoints, but this does not establish superiority over a suitably selected BP model. The thesis itself notes that such prolonged BP training was included for comparison with WP and was not a recommended BP training protocol~\cite[Sec.~3.3]{stricker2023}. The empirical conclusion is proximity to the earlier BP accuracy at a much larger training cost.

\subsection{Batch size, data exposure, and learning speed}
\noindent Figure~\ref{fig:exposure} separates the two budget views reported in the thesis. Against update count, batches of 4,000--10,000 approach a similar accuracy near 97\%, while batches of 1,000 and 2,000 remain lower. Against epochs, the largest batch no longer gives the strongest curve: $B=4{,}000$ reaches a higher accuracy within the displayed exposure range than $B=10{,}000$. Beyond a useful stabilization regime, processing more examples per update yields diminishing returns.

\begin{figure}[p]
\centering
{\normalsize\key{curveblue}{$B=1{,}000$}\hspace{1.1em}\key{curveorange}{$B=2{,}000$}\hspace{1.1em}\key{curvegreen}{$B=4{,}000$}\par}
\vspace{3pt}
{\normalsize\key{curvered}{$B=6{,}000$}\hspace{1.4em}\key{curvepurple}{$B=10{,}000$}\par}\vspace{7pt}
{\normalsize\textbf{(a)}\; Nominal training epochs\par}\vspace{2pt}
\includegraphics[width=.79\linewidth]{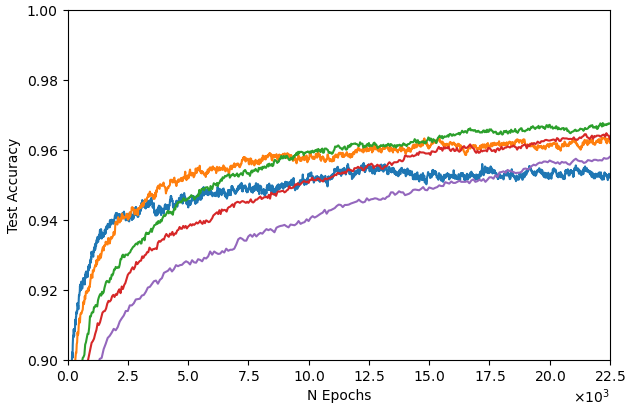}\par\vspace{8pt}
{\normalsize\textbf{(b)}\; Weight updates\par}\vspace{2pt}
\includegraphics[width=.79\linewidth]{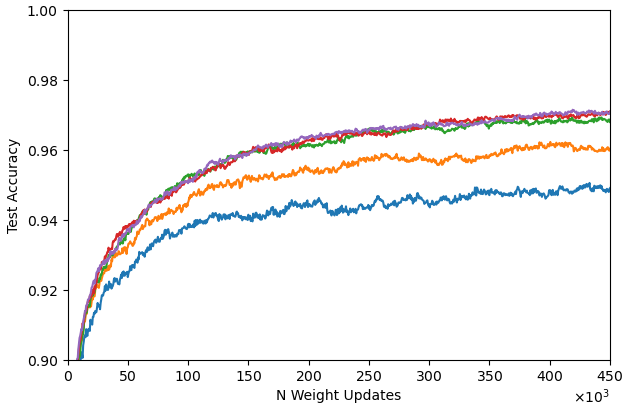}
\caption{Budget dependence of shallow WP, preserving the two original panels from the thesis~\cite{stricker2023}. Colors encode batch size as in the reinstated legend: 1,000 (blue), 2,000 (orange), 4,000 (green), 6,000 (red), and 10,000 (purple). Top: epochs; bottom: weight updates. Both horizontal axes carry a $10^3$ multiplier, and both vertical ranges start at 0.90.}
\label{fig:exposure}
\end{figure}

The distinction is substantial: 450,000 updates at $B=4{,}000$ represent 30,000 nominal epochs, and at $B=10{,}000$ they represent 75,000. The ten-epoch results give a complementary short-budget comparison (Table~\ref{tab:short}). Shallow WP reaches 90.2\%, 7.2 percentage points below the reported BP reference. Five-hidden-layer WP reaches 73.1\%. Near-BP long-run accuracy consequently does not imply near-BP learning speed.

\begin{table}[tbp]
\centering
\caption{Ten-epoch results transcribed from the thesis~\cite[Sec.~3.3]{stricker2023}. These settings are specific to the short-budget search. The WP perturbation-strength setting is reported as $1.6\times10^{-3}$.}
\label{tab:short}
\normalsize
\begin{tabularx}{\linewidth}{@{}Yrrr@{}}
\toprule
Configuration & Batch size & Learning rate & Accuracy (\%)\\
\midrule
WP, shallow & 2,000 & $2.15\times10^{-3}$ & 90.2\\
WP, five hidden layers & 2,000 & $4.00\times10^{-3}$ & 73.1\\
BP, reported reference & 4,000 & $1.50\times10^{-2}$ & 97.4\\
\bottomrule
\end{tabularx}
\end{table}

\FloatBarrier
\subsection{Stabilization of deeper networks}
\noindent The five-hidden-layer comparison exposes a stronger stability constraint. At $B=4{,}000$, Scen1 initially improves but subsequently collapses below the plotted range. At $B=10{,}000$, its solid curve reaches approximately 93\% before the dashed extension begins. Larger batches can improve trainability without establishing deep-network parity.

The thesis further reports that decreasing the learning rate stabilizes narrow deep networks. Population averaging provides a complementary intervention: ten perturbations in Scen1 yield accuracy comparable to a reduced-learning-rate reference, while four perturbations stabilize the wide five-hidden-layer configuration where preceding single-perturbation settings failed~\cite[Sec.~3.4.3]{stricker2023}. Indeed, the thesis's final comparison states that the deeper reference had not achieved BP-comparable accuracy within the considered budgets~\cite[Sec.~3.6]{stricker2023}.

\section{Discussion}

\noindent WP can attain near-BP accuracy on MNIST in a fully connected network trained through scalar loss feedback. Extended large-batch training reaches approximately $97\%$ test accuracy, within about $0.5$--$1$ percentage point of the early BP reference (Fig.~\ref{fig:long}). This result demonstrates that the lower accuracies observed in the small-batch runs are not an intrinsic ceiling of the WP learning rule. A substantial part of the accuracy deficit can be recovered through stabilization and additional training. The experiments therefore separate two properties that are easily conflated when evaluating biologically motivated learning: attainable accuracy and learning efficiency.

The high-accuracy regime extends the earlier MNIST comparison of Z\"uge et al.~\cite{zuge2023}. Their network with one hidden layer reached $92.3\%$ with WP and $97.7\%$ with SGD after 50,000 updates at a batch size of 1,000. Our extended experiments show that WP can move substantially closer to the BP reference when larger batches and longer training are available. The contribution is this empirical demonstration of near-BP accuracy in the hidden-layer setting. It complements the established interpretation of WP as a stochastic gradient estimator~\cite{dembo1990,cauwenberghs1993,werfel2005} by showing how much performance can be recovered in a nonlinear classification network.

Computational efficiency remains the principal practical limitation. In the ten-epoch comparison, WP reaches $90.2\%$, whereas the reported BP reference reaches $97.4\%$ (Table~\ref{tab:short}). Approaching the latter accuracy with WP requires substantially more training. This behavior is consistent with the variance introduced when a high-dimensional update is estimated from a scalar loss difference~\cite{werfel2005,zuge2023}. Smaller learning rates limit destabilizing parameter changes, while averaging independent perturbation estimates reduces update variability~\cite{suszynski2013}. These mechanisms support continued learning, but they incur costs through additional updates or additional forward evaluations.

The batch-size comparison further shows why more averaging is not automatically more efficient. Larger batches expose each perturbation to a greater fraction of the task, while population averaging combines independent perturbation realizations. In the update-indexed curves, batches from 4,000 to 10,000 approach similar accuracies. When performance is indexed by epochs, the largest batch loses its advantage (Fig.~\ref{fig:exposure}). The useful operating regime therefore balances update stability against the amount of computation devoted to each step. Maximizing batch size or population size alone can yield diminishing returns; the objective is sustained learning with an affordable amount of averaging.

Network depth makes this balance more demanding. In the tested deeper architectures, settings that support shallow learning can produce unstable updates, and recovering useful learning requires smaller learning rates or stronger averaging. These interventions improve trainability, although the deeper configurations remain below the BP comparison within the investigated budgets. The results identify stabilization and slow convergence as central challenges for scaling WP. A larger representation hierarchy increases the demands on the learning process, so architectural capacity and effective training progress must be considered together. Improving the information extracted from each perturbation is consequently an important direction for extending WP to deeper networks.

Hybrid learning provides a complementary direction. Stricker et al.~\cite{stricker2024} combine competitive Hebbian feature learning with a joint Hebbian--WP classification-layer update in sparse excitatory networks. Inoue et al.~\cite{inoue2026} extend this framework to deeper nonnegative architectures using a PCA-related competitive update and inter-layer normalization. In these systems, hidden representations develop through local activity-dependent learning, while perturbation-based supervision acts at the readout. The present study addresses WP across all weight matrices. Together, these approaches motivate examining how the allocation of learning between local representation mechanisms and scalar-feedback updates affects accuracy, stability, and computational cost.

The empirical scope of the present results is fully connected MNIST classification. Extending the comparison to more complex tasks will determine how the required training budget changes with problem difficulty and network size. Relating the large exposure budgets to biologically feasible learning times is also an important question. The immediate methodological priority is to reduce the cost of reaching the high-accuracy regime demonstrated here, through more informative perturbation estimates and a better balance between learning rate, batch size, and population averaging.

\subsection{Conclusion}

\noindent Model-free weight perturbation reaches approximately $97\%$ accuracy on MNIST and approaches the performance of backpropagation. This demonstrates that a biologically motivated learning rule can support high classification accuracy despite substantially slower learning. Larger batches, longer training, and stable updates are decisive in recovering that performance. The accompanying finite-horizon analysis establishes conditions for arbitrarily close approximation of a fixed gradient-descent trajectory. For the configurations examined here, the main practical challenge is the efficiency with which WP reaches high accuracy, especially as network depth increases.

\section*{Acknowledgements}
The author thanks Andreas Knoblauch and German Nemirovski for supervising the master's thesis underlying this study. The original experiments used the bwUniCluster~2.0 computing infrastructure, as documented in the thesis~\cite{stricker2023}.

\section*{Prior dissemination and availability}
The empirical results and original learning-curve artwork derive from the author's 2023 master's thesis~\cite{stricker2023}, authored as Patrick Stricker and referenced through its OPUS-HSAS record (document 1713). This article selects and reframes its MNIST findings and adds the explicit finite-horizon analysis and updated literature context; the experiments are not presented as newly performed. The figures are reproduced as selected panels with restored legends and clarified captions. The underlying run logs and training code are not distributed with this article.

\section*{Generative AI Statement}
The author declares that generative AI (OpenAI ChatGPT) was used in the preparation of this manuscript for initial drafting, language editing, proofreading, and assistance with the formulation of the finite-horizon analysis. The original master's thesis (Stricker, 2023), including all underlying experiments and results, was conducted and written entirely without generative AI. The author assumes full responsibility for the scientific content of this manuscript.

\appendix
\section{Finite-horizon approximation and classification agreement}
\label{app:tracking}
\subsection{Estimator bias and variance}

For a fixed batch, write $F(\theta)=E_{\mathcal B}(\theta)$ and $u\sim\mathcal N(0,I_d)$. Under conditions permitting Gaussian integration by parts,
\begin{equation}
 \E\widehat g_{\sigma,P}(\theta)=\nabla F_\sigma(\theta),
 \qquad F_\sigma(\theta)=\E_u F(\theta+\sigma u).
 \label{eq:smoothing}
\end{equation}
This Gaussian-smoothing identity underlies randomized gradient-free methods~\cite{nesterov2017,salimans2017}; the small-perturbation gradient interpretation of WP is already established~\cite{zuge2023}. The baseline term vanishes in expectation, while $\sigma^{-1}\E[F(\theta+\sigma u)u]=\E\nabla F(\theta+\sigma u)$. Thus finite-$\sigma$ WP is unbiased for the smoothed gradient, not generally the original gradient.

Sufficient smoothness gives bias $O(\sigma^2)$, and independence gives
\begin{equation}
 \E\|\widehat g_{\sigma,P}-\nabla F\|^2
 =\|\nabla F_\sigma-\nabla F\|^2+
 \frac{\tr\Cov(\widehat g_{\sigma,1})}{P}.
 \label{eq:mse}
\end{equation}
Increasing $P$ removes variance, not smoothing bias. Decreasing $\sigma$ alone does not remove the directional variability of a single estimate. In the linearized limit, writing $g=\nabla F(\theta)$, Gaussian fourth moments give
\begin{equation}
 \Cov\big((g^\top u)u\big)=\|g\|^2I_d+gg^\top,
 \qquad
 \E\left\|\frac1P\sum_{p=1}^P(g^\top u_p)u_p-g\right\|^2
 =\frac{d+1}{P}\|g\|^2.
 \label{eq:linearvariance}
\end{equation}
Consistency is therefore compatible with substantial cost in high dimension.

\subsection{Finite-horizon trajectory approximation}

The following statement formalizes the theoretical approximation claim; it is not an additional empirical result.

\begin{proposition}[Tracking a fixed gradient-descent reference]
Fix a finite-dimensional network, an initial vector, a finite sequence of batches, and $T<\infty$ learning rates $\eta_t\geq0$. Let
\begin{equation}
 \theta^*_{t+1}=\theta^*_t-\eta_t\nabla F_t(\theta^*_t),
 \qquad t=0,\ldots,T-1,
 \label{eq:reference}
\end{equation}
where $F_t$ is the prescribed batch loss. On a compact neighborhood of the reference trajectory, assume Lipschitz gradients and the uniform bounds
\begin{equation}
 \|\E\widehat g_{\sigma,P,t}(\theta)-\nabla F_t(\theta)\|
 \leq C_b\sigma^2,
 \qquad
 \tr\Cov(\widehat g_{\sigma,1,t}(\theta))\leq V
 \label{eq:uniform}
\end{equation}
for sufficiently small $\sigma$, with finite $C_b,V$. Assume deterministic loss evaluation, independent Gaussian perturbations, and exact arithmetic. For any $\varepsilon,\delta>0$, there exist $\sigma>0$ and finite $P$ such that WP, using the same initialization, batches, and learning rates, satisfies
\[
 \Pr\!\left(\max_{0\leq t\leq T}\|\theta_t-\theta_t^*\|<\varepsilon\right)>1-\delta.
\]
If the reference classifier at $T$ has a unique largest logit on every example of a finite test set and logits are continuous in the parameters, $\sigma$ and $P$ can also be chosen to yield identical test predictions with probability greater than $1-\delta$.
\end{proposition}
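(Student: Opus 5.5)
The argument is a discrete Grönwall recursion on the error $e_t=\theta_t-\theta^*_t$. We keep the WP iterates inside the compact neighborhood $K$ of the reference trajectory by a stopping-time argument, and control the per-step noise by Chebyshev's inequality using \eqref{eq:variance} and \eqref{eq:uniform}.

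Let $L$ be a common Lipschitz constant of $\nabla F_t$ on $K$. Choose $r>0$ so that the closed $r$-tube around $\{\theta^*_0,\dots,\theta^*_T\}$ lies in $K$, and set $\varepsilon'=\min(\varepsilon,r)/2$.

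\textbf{Step 1: error recursion.} At each step write
\[
\widehat g_{\sigma,P,t}(\theta_t)=\nabla F_t(\theta_t)+b_t+\zeta_t ,
\]
where $b_t=\E[\widehat g_{\sigma,P,t}(\theta_t)\mid\theta_t]-\nabla F_t(\theta_t)$ is the bias and $\zeta_t$ is the conditionally mean-zero fluctuation. By \eqref{eq:uniform} we have $\|b_t\|\le C_b\sigma^2$. By \eqref{eq:variance} together with \eqref{eq:uniform}, $\E[\|\zeta_t\|^2\mid\theta_t]\le V/P$ whenever $\theta_t\in K$. Subtracting \eqref{eq:reference} from the WP update and using the Lipschitz bound gives, as long as $\theta_t$ stays in the tube,
\[
\|e_{t+1}\|\le(1+\eta_tL)\|e_t\|+\eta_t C_b\sigma^2+\eta_t\|\zeta_t\| ,
\]
with $e_0=0$. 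Unrolling this over the finite horizon yields
\[
\max_t\|e_t\|\le A\sum_{t<T}\eta_t\bigl(C_b\sigma^2+\|\zeta_t\|\bigr),
\qquad A=\prod_{t<T}(1+\eta_tL).
\]

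\textbf{Step 2: probability bound.} Define the stopping time $\tau$ as the first $t$ with $\|e_t\|\ge\varepsilon'$, and apply the recursion only up to $\tau$. Choose $\sigma$ so small that $A C_b\sigma^2\sum_t\eta_t<\varepsilon'/2$. It then suffices that $\|\zeta_t\|\mathbf 1\{t<\tau\}\le \varepsilon'/(2AT\bar\eta)$ for every $t<T$, where $\bar\eta=\max_t\eta_t$ (if $\bar\eta=0$ the claim is trivial).

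The event $\{t<\tau\}$ is determined by $\theta_t$. Conditioning on $\theta_t$ and applying Chebyshev's inequality, each of these $T$ events fails with probability at most $4A^2T^2\bar\eta^2V/(P\varepsilon'^2)$. A union bound over the $T$ steps gives total failure probability at most $4A^2T^3\bar\eta^2V/(P\varepsilon'^2)$, which is below $\delta$ for $P$ large. On the complementary event, induction shows that $\tau>T$ and $\max_t\|e_t\|<\varepsilon'\le\varepsilon$.

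\textbf{Step 3: prediction agreement.} The test set is finite, so the minimum logit margin of the reference classifier at $\theta^*_T$ is some $m>0$. Continuity of the logits at $\theta^*_T$ gives $\rho>0$ such that every parameter within distance $\rho$ of $\theta^*_T$ changes each logit by less than $m/2$, and hence preserves every argmax. Rerunning Steps 1–2 with $\min(\varepsilon,\rho)$ in place of $\varepsilon$ gives identical test predictions with probability greater than $1-\delta$.

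\textbf{Main obstacle.} The delicate point is that the uniform bounds \eqref{eq:uniform} hold only on $K$. The noise at step $t$ is therefore controlled only while the trajectory has not yet left the tube, and the iterates are random and dependent across steps. The stopping-time localization handles this: all probability estimates are made conditionally on $\theta_t$ on the event $\{t<\tau\}$, which is measurable with respect to the past. Chebyshev's inequality needs only a conditional second moment, so no independence across steps is required beyond the fresh perturbations drawn at each step.
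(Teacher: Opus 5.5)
Your proposal is correct and follows essentially the same route as the paper: a discrete Gr\"onwall recursion for $\|\theta_t-\theta_t^*\|$, localized to the compact neighborhood up to the first exit, combined with a per-step second-moment tail bound, a union bound over the $T$ updates, and a logit-margin continuity argument for the predictions. The differences are cosmetic: you bound the $C_b\sigma^2$ bias deterministically and apply Chebyshev only to the fluctuation, where the paper applies Markov to the total mean-square error $C_b^2\sigma^4+V/P$; you use $m/2$ rather than $m/3$; and your statement that $\{t<\tau\}$ is determined by $\theta_t$ should read $\theta_0,\dots,\theta_t$, which is what you effectively use when you condition on the past.
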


\begin{proof}
Let $L$ be a common Lipschitz constant. Conditional on the past and remaining in the neighborhood, Eq.~\eqref{eq:mse} bounds mean-square estimation error by $C_b^2\sigma^4+V/P$. Markov's inequality bounds the probability of an error exceeding $a>0$ at an update by $(C_b^2\sigma^4+V/P)/a^2$. Whenever all estimation errors are at most $a$, the parameter discrepancy $e_t=\|\theta_t-\theta_t^*\|$ obeys
\begin{equation}
 e_{t+1}\leq(1+\eta_t L)e_t+\eta_t a,
 \quad
 e_t\leq a\sum_{s=0}^{t-1}\eta_s
 \prod_{r=s+1}^{t-1}(1+\eta_rL).
 \label{eq:trackingbound}
\end{equation}
Choose $a$ so this finite-horizon bound is below both $\varepsilon$ and the neighborhood radius. Choose $\sigma$ small and $P$ large enough that $T(C_b^2\sigma^4+V/P)/a^2<\delta$. A conditional union bound up to the first possible exit, together with Eq.~\eqref{eq:trackingbound}, establishes the trajectory statement.

For classification, let $c_n$ be each reference prediction. Its minimum logit margin is
\[
 m=\min_n\left[z_{c_n}(x_n;\theta_T^*)-\max_{k\neq c_n}z_k(x_n;\theta_T^*)\right]>0.
\]
Finiteness and continuity give a parameter neighborhood in which every logit changes by less than $m/3$. All predictions are preserved there. Choose $\varepsilon$ within this neighborhood and apply the trajectory result.
\end{proof}

The reference includes mini-batch gradient descent with gradients computed by BP, but not arbitrary optimizers unless their state is also tracked. Constants may grow with depth, dimension, and horizon. The statement asserts neither global optimality nor equivalence at an affordable budget. Finite-precision cancellation and noisy loss evaluation can prevent arbitrarily small practical perturbations.

\begingroup
\normalsize
\setlength{\bibsep}{2.4pt}

\endgroup
\end{document}